\newtheorem{theorem}{Theorem}
\title{LGAN: An Efficient High-Order Graph Neural Network \\via the Line Graph Aggregation}
\author{
    Lin Du\textsuperscript{\rm 1},
    Lu Bai\textsuperscript{\rm 1}\thanks{Corresponding author: Lu Bai},
    Jincheng Li\textsuperscript{\rm 1},
    Lixin Cui\textsuperscript{\rm 2},
    Hangyuan Du\textsuperscript{\rm 3},
    Lichi Zhang\textsuperscript{\rm 4},
    Yuting Chen\textsuperscript{\rm 5},
    Zhao Li\textsuperscript{\rm 6}
}
\begin{document}

\maketitle

\begin{abstract}
Graph Neural Networks (GNNs) have emerged as a dominant paradigm for graph classification. Specifically, most existing GNNs mainly rely on the message passing strategy between neighbor nodes, where the expressivity is limited by the 1-dimensional Weisfeiler-Lehman (1-WL) test. Although a number of $k$-WL-based GNNs have been proposed to overcome this limitation, their computational cost increases rapidly with $k$, significantly restricting the practical applicability. Moreover, since the $k$-WL models mainly operate on node tuples, these $k$-WL-based GNNs cannot retain  fine-grained node- or edge-level semantics required by attribution methods (e.g., Integrated Gradients), leading to the less interpretable problem. To overcome the above shortcomings, in this paper, we propose a novel Line Graph Aggregation Network (LGAN), that constructs a line graph from the induced subgraph centered at each node to perform the higher-order aggregation.  We theoretically prove that the LGAN not only possesses the greater expressive power than the 2-WL under injective aggregation assumptions, but also has lower time complexity. Empirical evaluations on benchmarks demonstrate that the LGAN outperforms state-of-the-art $k$-WL-based GNNs, while offering better interpretability.
\end{abstract}
\section{Introduction}

Recently, Graph Neural Networks (GNNs) have proven to be powerful tools for graph-structured data analysis across various domains~\cite{bai2023,cui2024,qin2025}. Most GNNs follow the message-passing framework, where each node updates its representation by aggregating information from its neighbors, thereby incorporating both node features and graph topology.

A fundamental limitation of such Message Passing Neural Networks (MPNNs) lies in their inability to distinguish certain non-isomorphic graphs, which has been proven to be at most as powerful as the 1-dimensional Weisfeiler-Lehman (1-WL) test~\cite{xu2018powerful}. This limitation is usually caused by ignoring interactions among neighbors, so that the MPNNs cannot distinguish the subtle structural differences like triangles.
\begin{figure}[t]
\centering
\includegraphics[width=1\columnwidth]{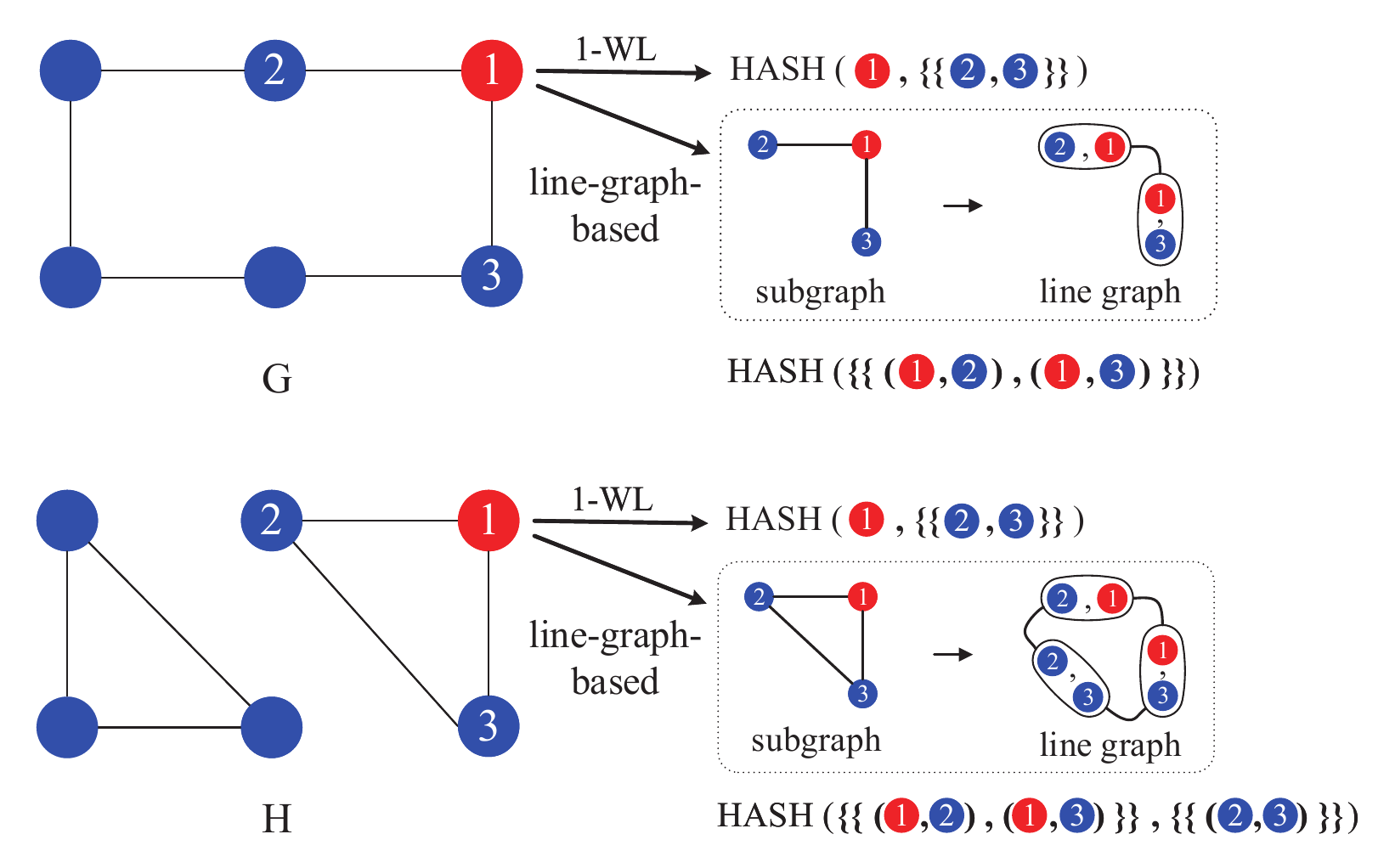}
\caption{Motivating example: why 1-WL fails and line-graph-based aggregation succeeds.}
\label{fig1}
\end{figure}
For instance, Figure~\ref{fig1} exhibits a pair of graphs $G$ and $H$, that are non-isomorphic but indistinguishable under the $1$-WL test. The red target node $1$ in both graphs aggregates messages from an identical multiset of neighbors (i.e., nodes $2$ and $3$), leading to identical hash updates despite distinct substructures.

To overcome the expressivity limitations of MPNNs, researchers have developed a number of alternative GNNs based on the higher-order $k$-WL test~\cite{cai1992optimal,shervashidze2011weisfeiler} or stronger variants such as the $k$-FWL (Folklore Weisfeiler-Lehman) test. Since the $k$-WL test provides a more powerful framework for capturing the structural similarity of graphs, these $k$-WL-based GNNs simulating the higher-order isomorphism tests can  achieve better performance on challenging graph learning tasks~\cite{morris2019goneural, maron2019provably,morris2020weisfeiler,bodnar2021weisfeiler}. However, they rely on operations over node tuples or sets. While increasing $k$ enables capturing higher-order node interactions~\cite{sato2020survey,huang2021short},  this design also results in greater computational overhead and reduced interpretability.

On the other hand, the line graphs from the classical graph theory have recently offered another promising direction for improving the expressive power of GNNs. For a line graph, each node corresponds to an edge of the original graph structure, and each edge is formed if its connected nodes corresponding to the original edges sharing the same original node. This construction inherently captures the higher-order structural information that is not directly accessible in the original graph. As shown in Figure~\ref{fig1}, the line graph of the induced subgraph centered at node $1$ encodes not only the node-neighbor relations $(1,2)$ and $(1,3)$, but also the neighbor-neighbor interactions $(2,3)$, enabling more expressive and structure-aware aggregation. Thus, this enriched representation can naturally distinguish graphs $G$ and $H$ with just one iteration. Motivated by this observation, the line-graph-based neural networks have attracted growing attention, particularly in edge-centric tasks such as link prediction~\cite{zhang2023line,liang2025line} and community detection~\cite{chen2017supervised}. However, their theoretical connection to the $k$-WL test is still not clear, influencing the employments for other graph-level tasks.

To address the above shortcomings, we propose a novel GNN framework, namely the \textbf{Line Graph Aggregation Network (LGAN)}, for graph classification. Our key idea is to perform the information aggregation associated with the line graphs, that are constructed from the induced subgraph centered at each node. The proposed method can employ either the structural relationship between an original graph and its line graph or the favorable properties of the line graph. Overall, \textbf{the main contributions} are threefold.
\begin{itemize}
    \item[1.] \textbf{We introduce the LGAN, a novel line-graph-based GNN for graph-level tasks.} Unlike the previous line-graph-based methods focusing on edge-level tasks, the LGAN constructs line graphs over the node-centered subgraphs and further aggregates their messages to represent the whole graph. Moreover, the LGAN employs a dual aggregation design over the line graph to reinforce its isomorphic correspondence with the original graph.
    \item[2.] \textbf{We prove the expressive power of the LGAN and its relationship to the $k$-WL test.} We prove that the LGAN surpasses the 2-WL test, under the assumption that aggregation-related functions are injective. Moreover, we show that the LGAN simulates the behavior of 2-FWL in a localized manner, while reducing the time complexity from cubic to nearly linear on sparse graphs.
    \item[3.] \textbf{We empirically validate the performance of the LGAN on graph classification tasks.} The LGAN consistently outperforms or matches state-of-the-art baselines including the $k$-WL-based models. Moreover, the LGAN enables the visualization of important substructures in both synthetic and real-world graphs through edge-level attributions.
\end{itemize}

\section{Preliminary Concepts}

\subsection{The Basic Notations}

Let  $G(V, E, X_V)$ be an undirected graph with node set $V$, edge set $E$, and node feature matrix $X_V = \{\mathbf{x_v} \in \mathbb{R}^d \mid v \in V\}$. The number of nodes and edges in $G$ are denoted by $n = |V|$ and $m = |E|$, respectively. The degree of a node $v \in V$ is denoted by $d_{v}$. The neighborhood of a node $v$ is defined as $\mathcal{N}(v) = \{u \in V \mid (v, u) \in E\}$. A pair of isomorphic graphs $G$ and $H$ are denoted as $G \cong H$. There exists a bijection $\varphi : V(G) \rightarrow V(H)$ such that $(u, v) \in E(G)$  if and only if $(\varphi(u), \varphi(v)) \in E(H)$. $\{\!\{ \cdot \}\!\}$ denotes a multiset, a set that allows repeated elements.

\subsection{The Weisfeiler-Lehman Test}
We commence by introducing the \textbf{1-WL} test~\cite{weisfeiler1968reduction}, which is a well-known algorithm for graph isomorphism test. Given a labeled graph $(G,\ell)$, the algorithm proceeds by iteratively updating the node representations based on the local neighborhood information. Initially, each node is assigned a color by hashing its input features. At iteration \( l > 0 \), each node \( v \in V \) updates its color by aggregating various colors of its multiset neighbors from the previous iteration and applying a hash function to this multiset with its previous color. The update rule is given by
\begin{equation}\label{eq1}
c^{(l)}_{v} = \mathrm{HASH}\left( c^{(l-1)}_{v}, \{\!\{ c^{(l-1)}_{u} \mid u \in \mathcal{N}(v) \}\!\} \right).
\end{equation}
where $c^{(l)}_v$ denotes the color (i.e., the label) of node $v$ at iteration $l$, and $\mathrm{HASH}(\cdot)$ is an injective function to preserve distinctions between different inputs. This process is repeated until the color assignments converge, i.e., no further changes occur across iterations. It can be observed that the $1$-WL algorithm cannot distinguish all non-isomorphic graphs, as illustrated in Figure~\ref{fig1}.

We next introduce the \textbf{k-WL} test, a natural generalization of the $1$-WL algorithm. Unlike the $1$-WL that colors individual nodes, the $k$-WL test operates on the $k$-tuples of nodes and refines their labels by incorporating the structure of the induced $k$-dimensional neighborhoods. Given a labeled graph $(G, \ell)$, the algorithm first assigns initial colors to all $k$-tuples of nodes based on their isomorphism types and labels. At each iteration $l > 0$, the color of a $k$-tuple $\mathbf{v} = (v_1, \dots, v_k) \in V^k$ is updated by aggregating the colors of its neighboring tuples. Specifically, for each position $i \in \{1, \dots, k\}$, we compute the multiset of colors of all tuples obtained by replacing the $i$-th element of $\mathbf{v}$ with every possible node $w \in V$, and then update the color of $\mathbf{v}$. This process is defined as
\begin{subequations}\label{eq2}
    \begin{flalign}
        &c^{(l)}_{\mathbf{v},i} \leftarrow
        \{\!\{ c^{(l-1)}_{\mathbf{v}_{(i \rightarrow w)}} \mid w \in V \}\!\}, \label{eq2sub1}\\
        &c^{(l)}_{\mathbf{v}} = \mathrm{HASH} \left( c^{(l-1)}_{\mathbf{v}}, c^{(l)}_{\mathbf{v},1}, c^{(l)}_{\mathbf{v},2}, \dots, c^{(l)}_{\mathbf{v},k} \right). \label{eq2sub2}
    \end{flalign}
\end{subequations}
where $\mathbf{v}_{(i \rightarrow w)}$ denotes the $k$-tuple resulting from replacing the $i$-th entry of $\mathbf{v}$ with node $w$, $c^{(l-1)}_{\mathbf{v}_{(i \rightarrow w)}}$ is the color assigned to the modified tuple at the previous iteration $(l-1)$, $c^{(l)}_{\mathbf{v},i}$ denotes the multiset of colors obtained by varying the $i$-th element of $\mathbf{v}$, and $c^{(l)}_{\mathbf{v}}$ is the updated color of tuple $\mathbf{v}$ at iteration $l$. The process repeats until the coloring stabilizes.

We then describe the \textbf{k-FWL} test, a variant of the $k$-WL test that is often referred to the Folklore version. Given a labeled graph $(G, \ell)$, each $k$-tuple $\mathbf{v} = (v_1, \dots, v_k) \in V^k$ is initially colored according to its isomorphism type and node labels. At each iteration $l > 0$, the color of a $k$-tuple $\mathbf{v}$ is updated by first constructing  a $k$-dimensional color vector for each node $w \in V$, and then mapping these vectors together with the previous color into a new color. Specifically, 
\begin{subequations}\label{eq3}
    \begin{flalign}
        &c^{(l)}_{\textbf{v},w} \leftarrow
        \left(  c^{(l-1)}_{\textbf{v}_{(1 \rightarrow w)}} , \dots,  c^{(l-1)}_{\textbf{v}_{(k \rightarrow w)}} \right),   \label{eq3sub1}\\
        &c^{(l)}_\textbf{v} = \mathrm{HASH} \left( c^{(l-1)}_\textbf{v},  \{\!\{ c^{(l)}_{\textbf{v},w} \mid  w \in V \}\!\}    \right). \label{eq3sub2}
    \end{flalign}
\end{subequations}
Here, $c^{(l)}_{\mathbf{v},w}$ denotes the $k$-dimensional color vector obtained by collecting the colors $c^{(l-1)}_{\mathbf{v}_{(i \rightarrow w)}}$ of modified tuples $\mathbf{v}_{(i \rightarrow w)}$, and $c^{(l)}_{\mathbf{v}}$ is the updated color of tuple $\mathbf{v}$ at iteration $l$, computed by hashing its previous color with the multiset of these vectors. The iteration continues until convergence.

Note that, for all $k \geq 2$, the expressive power of the $k$-FWL test matches that of the $(k+1)$-WL test~\cite{grohe2015pebble,grohe2017descriptive,grohe2021logic}.

\subsection{The Line Graph}
Given an undirected graph $G(V, E)$, its \textbf{line graph} $L(G)$ is a dual graph whose node set corresponds to the edge set of $G$, i.e., $V(L(G)) = E(G)$. For $L(G)$, a pair of nodes $e_1 = (u_1, v_1)$ and $e_2 = (u_2, v_2)$ are connected by an edge, if and only if the corresponding edges in $G$ share at least one common endpoint. The line graph has some important properties that illustrate its relationship to the original graph. In particular, if the original graph $G$ is connected, then its line graph $L(G)$ is also connected. And if two graphs $G$ and $H$ are isomorphic, i.e., $G \cong H$, then their line graphs are also isomorphic, i.e., $L(G) \cong L(H)$. Moreover, an important classical result from the graph theory~\cite{whitney1992congruent} states the following isomorphism property of the line graph.
\begin{theorem}[Whitney's Isomorphism Theorem]\label{thm:whitney}
Let $G_1$ and $G_2$ be two connected graphs that are neither a triangle ($K_3$) nor a claw graph ($K_{1,3}$). If their line graphs are isomorphic, i.e., $L(G_1) \cong L(G_2)$, then the original graphs are also isomorphic, i.e., $G_1 \cong G_2$.
\end{theorem}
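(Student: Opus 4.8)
The plan is to promote the given line‑graph isomorphism to a graph isomorphism by recovering, inside $L(G)$, which cliques correspond to vertices of $G$. Write $\phi : V(L(G_1)) \to V(L(G_2))$ for the isomorphism, equivalently an adjacency‑preserving bijection $\phi : E(G_1) \to E(G_2)$. For a vertex $v$ of $G$, let $S_v \subseteq E(G)$ be the set of edges incident to $v$; then $S_v$ induces a clique in $L(G)$, a node $(u,v)$ of $L(G)$ lies in exactly the cliques $S_u$ and $S_v$, and an edge of $L(G)$ --- a pair of edges of $G$ sharing a vertex $w$ --- lies in exactly $S_w$. If I can build a bijection $\psi : V(G_1) \to V(G_2)$ with $\phi(S_v) = S_{\psi(v)}$ for every $v$, then for any $e = (u,v) \in E(G_1)$ the node $\phi(e)$ lies in $S_{\psi(u)} \cap S_{\psi(v)}$, which forces $\phi(e) = (\psi(u), \psi(v)) \in E(G_2)$; arguing symmetrically with $\phi^{-1}$ shows $\psi$ is an isomorphism $G_1 \cong G_2$. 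So the whole task reduces to showing that $\phi$ maps vertex-stars to vertex-stars.

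The easy half concerns high-degree vertices. In any line graph, a clique of size at least $4$ lies inside a unique vertex-star: given three pairwise adjacent edges, they either share a common vertex or form a triangle of $G$, and in the triangle case no fourth edge can be adjacent to all three, so a clique of size $\ge 4$ must consist of edges through a single common vertex. Hence the maximal cliques of $L(G)$ of size $\ge 4$ are precisely the stars $S_v$ with $d_v \ge 4$ --- a property intrinsic to the abstract graph $L(G)$ --- so $\phi$ carries such cliques to one another and thereby determines $\psi$ on every vertex of degree at least $4$.

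The remaining vertices have degree $\le 3$, and this is where the difficulty concentrates. A star $S_v$ with $d_v = 3$ is a size-$3$ clique that may be confused with the clique arising from a triangle of $G$, and stars with $d_v \le 2$ need not even be maximal cliques. This ambiguity is exactly why the two exceptional graphs are excluded: an isolated triangle and an isolated claw both have line graph $K_3$, so no argument can separate $K_3$ from $K_{1,3}$. The plan is to settle the triangle-versus-star ambiguity locally --- by testing whether a size-$3$ clique's common neighbourhood in $L(G)$ behaves as a star's does (it extends the clique to a larger one, or it is empty in a way consistent with the surrounding graph) --- and then to use connectivity to propagate the already-fixed values of $\psi$ around the high-degree vertices along the paths and cycles of $G$ down to the low-degree vertices. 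A short finite list of further small, highly symmetric graphs ($K_4$, $K_4$ minus an edge, short paths and cycles, and the like) makes this local test inconclusive, but each of them is determined up to isomorphism by its few edges, so the conclusion $G_1 \cong G_2$ can be checked by direct inspection in those cases.

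Putting the pieces together produces the bijection $\psi$ with $\phi(S_v) = S_{\psi(v)}$, and hence $G_1 \cong G_2$. I expect the genuine obstacle to be the middle step: separating size-$3$ vertex-stars from triangle-cliques inside $L(G)$ and handling the degree-$\le 2$ vertices, which is what forces the case analysis singling out $K_3$ and $K_{1,3}$; the high-degree reduction and the final incidence check are routine by comparison. An equivalent route is induction on $|V(G)|$ --- delete a vertex of $G$ whose removal leaves the graph connected, track the effect on $L(G)$, and invoke the inductive hypothesis --- but the same triangle and low-degree analysis is unavoidable there too.
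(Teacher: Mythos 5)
The paper does not actually prove this theorem: it states it as a classical result and cites Whitney's 1932 paper, so there is no in-paper proof to compare against. Judged on its own terms, your proposal follows the standard route to Whitney's theorem: reduce the problem to showing that the edge-bijection $\phi$ carries vertex-stars $S_v$ to vertex-stars, observe that every clique of $L(G)$ of size at least $4$ is contained in a unique star (three mutually adjacent edges either share a vertex or form a triangle of $G$, and a triangle-clique cannot be extended), and then fight the ambiguity between $3$-stars and triangles. The reduction, the size-$\ge 4$ clique lemma, and the final incidence check ($\phi(e)\in S_{\psi(u)}\cap S_{\psi(v)}$ forces $\phi(e)=(\psi(u),\psi(v))$) are all correct and cleanly argued.

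The gap is exactly where you predict it: the middle step is a plan, not a proof. The ``local test'' for whether a size-$3$ clique is a star --- examining its common neighbourhood in $L(G)$ --- is never made precise, and it genuinely cannot be made to work pointwise: in $K_4$, for example, $L(K_4)$ is the octahedron $K_{2,2,2}$, whose eight triangles (four from stars, four from triangles of $K_4$) are all equivalent under its automorphism group, so no intrinsic test in $L(G)$ separates them there. You do list $K_4$ among the small graphs to dispatch by inspection, but you give no criterion for when the local test is ``inconclusive,'' no proof that the exceptional list is finite and exhausted by the graphs you name, and no actual argument for the propagation step that extends $\psi$ from high-degree vertices to degree-$\le 3$ vertices along paths and cycles. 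That propagation, together with a rigorous disambiguation of star-triangles from triangle-triangles (e.g., via the uniqueness of the Krausz clique partition for graphs outside a short explicit list), is the entire content of Whitney's theorem; as written, the proposal identifies the obstacle but does not overcome it. Since the paper only needs the theorem as a black box, citing the classical proof is the appropriate move; if you want a self-contained proof, the clique-partition uniqueness argument is the piece you must actually supply.
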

The above theorem serves as an important theoretical foundation for the design of our work. Specifically, this theorem implies that, except for the special case $K_3$ and $K_{1,3}$, the structure of a graph can be uniquely reconstructed from its line graph. Our model will handle these exceptions via a tailored aggregation design, thus ensuring the theorem's necessity and sufficiency within our framework. 

\section{Related Works}

\subsection{The $k$-WL-based GNNs}
In recent years, several higher-order methods have been proposed to overcome the expressivity limitations of MPNNs. Among them, the $k$-WL-based GNNs are most relevant to our work. Notably,~\citet{morris2019goneural} introduced the $k$-dimensional GNNs (k-GNNs), that improve the scalability by simulating the set-based $k$-WL, but lose fine-grained structural information.~\citet{maron2019provably} proposed Provably Powerful Graph Networks (PPGN), which bypass the combinatorial $k$-WL simulation by directly applying global matrix multiplications to encode pairwise interactions, but lack localized message passing to explicitly model substructures such as triangles.~\citet{morris2020weisfeiler} proposed a localized variant of the $k$-WL ($\delta$-$k$-LGNN) to reduce the computational overhead, yet it remains mainly effective on sparse graphs.~\citet{bodnar2021weisfeiler} introduced CW Networks, which transform the original graph into a cell complex---a topological abstraction that is less intuitive than structures like line graphs. 
Overall, these models show that $k$-order invariant or equivariant GNNs can achieve expressivity equivalent to the $k$-WL isomorphism test.


Although these $k$-WL-based models capture higher-order structural information, their computational cost grows rapidly with increasing $k$. Even for moderate values such as $k{=}3$, the memory and runtime costs become intractable on real-world graphs. Moreover, models defined on node tuples or sets tend to dilute node-level semantics, limiting interpretability in practice. To address these issues, we will propose a novel LGAN model, which performs line-graph-based local aggregation to emulate the $k$-WL while preserving effective node-level representations.


\subsection{The Line-Graph-Based Neural Networks}
Line-graph-based neural networks have gained increasing attention for leveraging line graphs to model edge relations~\cite{cai2021line,zhang2023line,liang2025line}. These methods typically reformulate the edge prediction on the original graph as a node prediction task on its line graph. Beyond encoding the edge-centric relationships, the line graphs possess several well-established theoretical properties that make them particularly suitable for simulating higher-order isomorphism tests such as the $k$-WL. For example, the line graphs are tightly connected to the fundamental graph properties like connectivity and isomorphism~\cite{whitney1992congruent}, and they support the linear-time complexity algorithms for reconstructing the original graph~\cite{roussopoulos1973max,lehot1974optimaldetectlinegraph}. In addition, several studies in hypergraph modeling have shown that the line-graph-based representations are advantageous for capturing complex relational structures~\cite{ICPR2014,2016pr}.

However, existing line-graph-based networks mainly focus on edge-level tasks, without fully leveraging the structural properties. In contrast, our LGAN  targets graph-level classification with a tailored aggregation mechanism to capture the structural information. Furthermore, we will bridge the gap between line graphs and the $k$-WL tests by formally analyzing the expressivity of the proposed framework.

\begin{figure*}[t]
\centering
\includegraphics[width=0.85\textwidth]{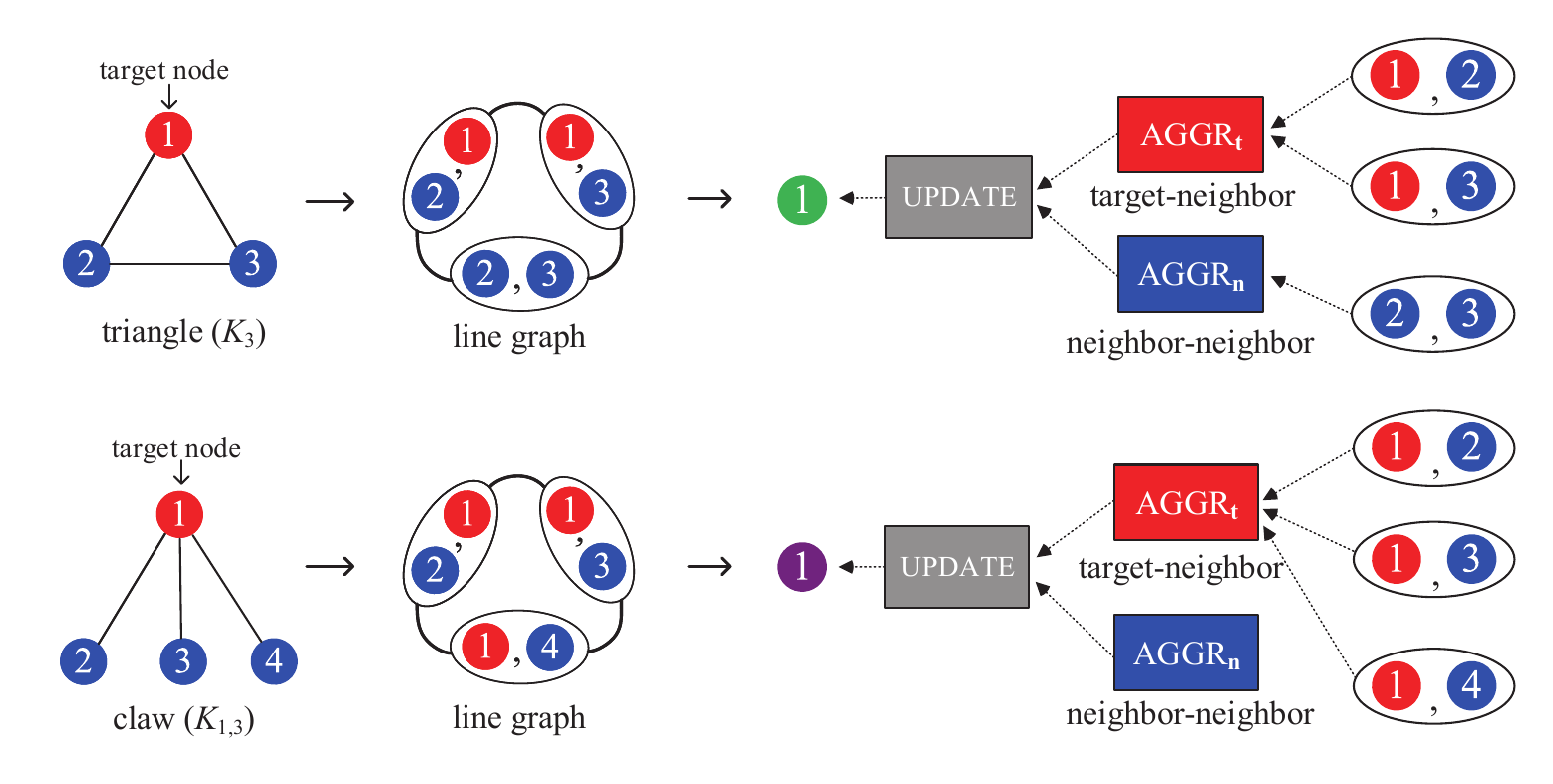} 
\caption{Overview of the Line Graph Aggregation Network (LGAN). For each target node, the LGAN constructs the induced subgraph, transforms it into a line graph, and performs two relation-specific aggregations: $\mathrm{AGGR}_{t}$ (red) for target--neighbor node pairs (i.e., edges sharing the same target node) and $\mathrm{AGGR}_{n}$ (blue) for neighbor--neighbor node pairs, followed by an \textsc{Update} fusion.
The examples on $K_{3}$ and $K_{1,3}$ illustrate how LGAN resolves the exceptional cases in Whitney's \textbf{Theorem~\ref{thm:whitney}}.}
\label{fig2}
\end{figure*}

\section{The Proposed LGAN Model}\label{sec:model}
In this section, we give the detailed definition of the proposed LGAN model. As illustrated by Figure~\ref{fig2}, for each target node,  the proposed LGAN performs the localized message passing, by first extracting the $1$-hop subgraph around each target node and constructing a line graph over this subgraph. Then the features through both target-neighbor and neighbor-neighbor interactions are aggregated to update the representation of the node. Finally, a multi-layer stacking scheme and global readout function are also employed for graph-level tasks. Below, we define these core components.

\subsection{The Line Graph Construction}

Given an input graph $G(V, E)$, the LGAN updates the hidden representation of each target node $t \in V$ at layer $l$, denoted by $\mathbf{h}^{(l)}_t \in \mathbb{R}^d$. It first constructs an induced subgraph $G_t = (V_t, E_t)$ centered at $t$, where $V_t = \{t\} \cup \mathcal{N}(t)$ and $E_t = \{(u, v) \in E \mid u, v \in V_t\}$. Then, the LGAN builds the line graph $L(G_t) = (E_t, \mathcal{E}_t)$, where each node represents an edge in $G_t$, and two nodes $e_1, e_2 \in E_t$ are connected in $\mathcal{E}_t$ if they share a common endpoint in $G_t$.

\subsection{The Relation-Specific Aggregation}
Let $\mathbf{h}^{(l-1)}_{\{u,v\}}$ denote the representation of a node in the line graph $L(G_t)$ at layer $l{-}1$, corresponding to the node pair $\{u,v\} \in E(G_t)$ in the original graph.  It is computed as
\begin{equation}\label{eq4}
    \mathbf{h}^{(l-1)}_{\{u,v\}} = \mathrm{COMB}_{\text{pair}} \left( \mathbf{h}^{(l-1)}_u , \mathbf{h}^{(l-1)}_v \right),
\end{equation}
where $\mathrm{COMB}_{\text{pair}}$ denotes a symmetric combination function (e.g., element-wise summation) for mapping unordered node pairs to  line graph node representation. 

Then, the LGAN defines two relation-specific aggregations over the line graph $L(G_t)$ as follows.
\begin{itemize}
    \item \textbf{Target--Neighbor Aggregation:} Aggregates features of node pairs $\{t, p\}$ incident to the target node $t$.
    \item \textbf{Neighbor--Neighbor Aggregation:} Aggregates features of node pairs $\{p, q\}$ among  the neighbors of $t$.
\end{itemize}

These aggregations correspond to the red and blue rectangles in Figure~\ref{fig2}, labeled as $\textsc{AGGR}_t$ and $\textsc{AGGR}_n$, respectively. To summarize the \textbf{LGAN}, we formally express the updated representation of the target node $t$ at layer $l$ as
\begin{equation}
    \small
    \mathbf{h}^{(l)}_t = \phi \left(\!
        \mathrm{AGGR}_t\left(\! \{\!\{ \mathbf{h}^{(l-1)}_{\{t,p\}}  \}\!\} \!\right) ,
        \mathrm{AGGR}_n\left(\! \{\!\{ \mathbf{h}^{(l-1)}_{\{p,q\}}  \} \!\} \!\right)
    \!\right),
    \label{eq:target-update}
\end{equation}
where $\phi$ is a learnable update function fusing the two relation-specific aggregated features. Its inputs are the outputs of $\mathrm{AGGR}_t$ and $\mathrm{AGGR}_n$, which are aggregation functions (e.g., sum) operating over multisets of features incident to the target-neighbor pairs and neighbor-neighbor pairs, respectively. Specifically, we adopt concatenation to retain maximal information from both branches. Other fusion strategies (e.g., the element-wise sum, mean, or attention-based gating) can also be applied within the framework.

To further enhance the information preservation and gradient flow, we also introduce a residual variant, termed as \textbf{LGAN-\emph{res}}, which incorporates the previous node representation via a residual connection~\cite{he2016deep}. Specifically, we add a linearly transformed residual from the previous layer to the fused aggregation result.
For the LGAN-\emph{res}, the updated representation of the target node $t$ is given by
{\small
    \begin{flalign}
&\mathbf{z}_t^{(l)} =
    \psi \left(\!
        \mathrm{AGGR}_t\left( \{\!\{ \mathbf{h}^{(l-1)}_{\{t,p\}} \} \!\} \right) ,
        \mathrm{AGGR}_n\left( \{\!\{ \mathbf{h}^{(l-1)}_{\{p,q\}} \} \!\} \right)
    \!\right), \\
&\mathbf{h}_t^{(l)} = \phi'\!\left( W^{(l)} \cdot \mathbf{h}_t^{(l-1)} + \mathbf{z}_t^{(l)} \right),
\label{eq:residual-update}
\end{flalign}
}where $\phi'$ is an additional multi-layer perceptron (MLP) applied after the residual summation.  The residual path includes a linear projection to match dimensions if necessary. $\psi$, $\mathrm{AGGR}_t$, and $\mathrm{AGGR}_n$ retain the same roles as in Eq.~\eqref{eq:target-update}, maintaining permutation invariance and injectivity.

For the cases where the input graph $G$ contains isolated nodes or nodes with no incident edges (i.e., $E(G_t) = \emptyset$), the corresponding line graph $L(G_t)$ becomes empty, resulting in no available edge features for aggregation. In such scenarios, the fused message $\mathbf{z}_t$  can be set to the zero vector. Consequently, the proposed LGAN-\emph{res} naturally reduces to a variant of GIN~\cite{xu2018powerful}, where the target node is updated solely based on a learnable projection of its previous representation, i.e., $\mathbf{h}_t^{(l)} = \phi'(W^{(l)} \cdot \mathbf{h}_t^{(l-1)})$. This fallback behavior preserves stability and ensures that isolated nodes still receive meaningful updates.

\subsection{The Multi-layer LGAN and Readout}
While the above derivation focuses on a single LGAN layer, we stack $L$ such layers to build a deep architecture as
\[
    \mathbf{H}^{(0)} = \mathbf{X}, \quad
    \mathbf{H}^{(l)} = \mathrm{LGANLayer}\bigl( \mathbf{H}^{(l-1)} \bigr), \; l = 1,\dots,L ,
\]
where $\mathbf{H}^{(l)}$ denotes the node representation matrix at  layer $l$. Then we adopt a skip-cat~\cite{xu2018skipcat} strategy, where the representations from all layers are concatenated before the final readout. The process is given by
\[
\mathbf{H}^{\text{skip}} = \left[ \mathbf{H}^{(1)} \,\|\,  \cdots \,\|\, \mathbf{H}^{(L)} \right], \quad
\mathbf{h}_G = \text{READOUT}(\mathbf{H}^{\text{skip}}),
\]
where $\|$ denotes concatenation along the feature dimension, $\mathbf{H}^{\text{skip}}$  is the concatenated node representation matrix from all $L$ layers,
and $\mathbf{h}_G$ is the final graph-level representation obtained by applying a READOUT function  over all nodes.

Note that, the aggregation, update and readout functions of the LGAN follow the DeepSets framework~\cite{zaheer2017deep}, using the sum-based aggregation and MLPs to achieve the permutation invariance and injectivity. This ensures the maximal expressive power~\cite{xu2018powerful}, while maintaining simplicity and efficiency.

\section{Theoretical Properties of the LGAN}
We analyze the expressive power and computational complexity of the proposed LGAN with general cases, where the line graph aggregation is feasible. We prove that it surpasses the  $2$-WL test under standard injectivity assumptions, while implicitly simulating the $2$-FWL behavior through the localized aggregation. We also demonstrate that the LGAN achieves the linear time complexity on sparse graphs.
\subsection{The Expressive Power of the LGAN}
To formalize the expressive advantage of the LGAN, we compare it with the set-based 2-WL test, a widely used and canonical variant operating on unordered node pairs with multiset aggregations. The following theorem shows that the LGAN is at least as powerful as the set-based 2-WL and can even distinguish some graphs that 2-WL cannot.

\begin{theorem}[The LGAN is more expressive than the set-based 2-WL]\label{LGAN surpasses 2-WL}
 Let $A: \mathcal{G} \rightarrow \mathbb{R}^d$ be an $L$-layer LGAN, the following components are injective at every layer, i.e., 1) the node-pair encoder $\mathrm{COMB}_{\text{pair}}$ mapping unordered node pairs to the representation of the corresponding line graph node, 2) the multiset aggregators $\mathrm{AGGR}_t$ and $\mathrm{AGGR}_n$ (e.g., sum-based DeepSets), 3) the fusion and update functions $\psi$, $\phi$ and $\phi'$, and 4) the graph-level readout function applied to the multiset of node representations. For any graphs $G$ and $H$ such that the induced subgraph $G_t$ (resp. $H_t$) for every node $t$ contains at least one edge (i.e., the corresponding line graphs are non-empty), the following holds.
\begin{itemize}
    \item If the set-based $2$-WL test distinguishes $G$ and $H$, then the LGAN also maps them to different graph representations, i.e., $A(G) \ne A(H)$.
    \item There exist graphs $G$ and $H$ such that the set-based $2$-WL test fails to distinguish them, but the LGAN produces different representations, i.e., $A(G) \ne A(H)$.
\end{itemize}
\end{theorem}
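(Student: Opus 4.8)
The plan is to treat the two bullets separately: the first (LGAN is at least as strong as set-$2$-WL) by reducing the comparison to a statement about colour refinements and then exhibiting a simulation, and the second (strict separation) by a single certifying pair of graphs. Throughout I would use the hypothesis that every induced subgraph $G_t$ contains an edge, which guarantees that the two line-graph aggregations are always non-degenerate so that no node falls back to the GIN-style update; and since the residual variant only makes the induced colouring finer, it suffices to treat the plain update of Eq.~\eqref{eq:target-update}. First I would pass to the symbolic version of the model, replacing $\mathrm{COMB}_{\mathrm{pair}}$, $\mathrm{AGGR}_t$, $\mathrm{AGGR}_n$, the fusion maps $\psi,\phi,\phi'$ and the readout by injective hash functions; this produces a node colouring $c^{(l)}_t$ and a graph invariant $\chi(G)=\mathrm{HASH}(\{\!\{c^{(L)}_t\mid t\in V\}\!\})$. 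By the DeepSets/GIN universality fact already invoked in the paper (sum-aggregation followed by an MLP realises any injective function of a bounded multiset, \cite{zaheer2017deep,xu2018powerful}), an $L$-layer LGAN with the stated injectivity realises this refinement on any fixed finite collection of graphs, so $\chi(G)\neq\chi(H)$ implies $A(G)\neq A(H)$. The first bullet then amounts to showing that $\chi$ is at least as fine as the graph-level invariant of set-$2$-WL, and the second to exhibiting a pair separated by $\chi$ but not by set-$2$-WL.

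For the domination I would first note that the target--neighbour branch alone reproduces one round of $1$-WL: from the multiset $\{\!\{\mathrm{COMB}_{\mathrm{pair}}(c^{(l-1)}_t,c^{(l-1)}_p)\mid p\in\mathcal{N}(t)\}\!\}$, injectivity of $\mathrm{COMB}_{\mathrm{pair}}$ recovers $c^{(l-1)}_t$ together with $\{\!\{c^{(l-1)}_p\mid p\in\mathcal{N}(t)\}\!\}$ (using that $t$ has a neighbour, and the other incident line-graph nodes to remove any residual ambiguity), which is exactly the $1$-WL update; hence $\chi$ refines the $1$-WL invariant. Since the set-based (non-folklore) $2$-dimensional test has the same graph-distinguishing power as $1$-WL, this already yields the first bullet. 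For a self-contained route that avoids citing that equivalence, I would instead argue by induction on the round $l$ that $c^{(l)}_t$ determines the round-$l$ set-$2$-WL colour of every $2$-set lying inside $G_t$ (the edges at $t$ and the edges among neighbours of $t$): the base case records only the edge/non-edge type and the node labels; the inductive step matches set-$2$-WL's ``replace one element of the pair'' neighbourhood against the two aggregation branches, using that $\{t,p\}$ is also a line-graph node of $L(G_p)$, so one additional LGAN layer carries the needed round-$(l-1)$ colours of $G_p$-pairs into the update at $t$. Summing these node colourings over $t$ then recovers the set-$2$-WL pair multiset, so $\chi$ dominates its graph invariant.

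For the strict separation I would take $G=C_6$ (the $6$-cycle) and $H=C_3\sqcup C_3$ (two disjoint triangles). Both are $2$-regular on six vertices and satisfy the hypothesis: every $1$-hop induced subgraph is a $3$-vertex path centred at $t$ in $G$ and a triangle in $H$, each containing an edge; and set-$2$-WL, like $1$-WL, stabilises immediately into the classes ``edge'' and ``non-edge'' with identical sizes and refinement behaviour in both graphs, so it cannot separate them. But after one LGAN layer every node of $G$ carries $\phi(\mathrm{AGGR}_t(\{\!\{\cdot,\cdot\}\!\}),\mathrm{AGGR}_n(\emptyset))$, whereas every node of $H$ carries $\phi(\mathrm{AGGR}_t(\{\!\{\cdot,\cdot\}\!\}),\mathrm{AGGR}_n(\{\!\{\mathrm{COMB}_{\mathrm{pair}}(c^{(0)}_{p_1},c^{(0)}_{p_2})\}\!\}))$; injectivity of $\mathrm{AGGR}_n$ separates the empty multiset from a non-empty one and injectivity of $\phi$ preserves the distinction, so the two graphs receive disjoint node-colour multisets and $A(G)\neq A(H)$. (This is exactly the mechanism behind Figure~\ref{fig1}.)

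The step I expect to be the main obstacle is the domination argument, for a structural reason: an LGAN layer only ever encodes $2$-sets that are edges or sit inside triangles, and aggregates them as multisets without message passing inside the line graph, so $\chi$ does not literally contain the full set-$2$-WL pair partition. The comparison must therefore be routed either through the classical identity ``set-$2$-WL $\equiv$ $1$-WL'' --- which makes it essential to fix precisely which variant of the set-based test is meant (a triangle-insensitive one, for which the argument above is routine, versus a triangle-sensitive local variant, for which the same domination should still go through, but now via the neighbour--neighbour branch) --- or through the propagation induction, whose inductive hypothesis and one-layer-lag bookkeeping need to be set up with care. The remaining ingredients (the injectivity-to-refinement reduction and the $C_6$ versus $C_3\sqcup C_3$ computation) are routine.
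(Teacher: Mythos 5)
Your proposal is essentially correct, but it reaches the first bullet by a genuinely different route than the paper. The paper proves domination by a direct induction in the ``forward'' direction: its inductive hypothesis is that a difference in the set-based $2$-WL colour of some pair $\{a,b\}$ at round $l$ forces a difference in the LGAN representation of one endpoint at layer $l+1$; the base case is split into a structural scenario (edge present in $G$ but not $H$) and a feature scenario, and the inductive step traces the mismatch in $\{\!\{c^{(l-1)}_{\{w,b\}}\}\!\}$ back to some witness pair $\{w^*,b\}$ and propagates it through $\mathrm{COMB}_{\text{pair}}$, $\mathrm{AGGR}_t$ and $\phi$ to a neighbouring target node. Crucially, this induction is carried on \emph{node} representations, which is exactly how the paper sidesteps the obstacle you correctly identify (that the LGAN never materialises representations for non-adjacent pairs). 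Your primary route --- show the target--neighbour branch refines $1$-WL and then invoke the classical equivalence of the oblivious/set-based $2$-WL with $1$-WL --- is shorter and arguably cleaner, and it is consistent with the paper's own reading of the test (its Figure~1 argument treats set-based $2$-WL as triangle-insensitive), but it imports an external equivalence the paper never states and is hostage to which variant of the set-based test is meant, as you note; your fallback determination-induction is closer in spirit to the paper's argument, just run in the reverse direction. For the second bullet, your $C_6$ versus $C_3\sqcup C_3$ pair is a valid substitute for the paper's Figure~1 pair and is verified by the same mechanism (empty versus non-empty input to $\mathrm{AGGR}_n$). One caveat worth tightening in your $1$-WL simulation step: because $\mathrm{COMB}_{\text{pair}}$ is symmetric and the plain update of Eq.~\eqref{eq:target-update} carries no explicit self-term $\mathbf h^{(l-1)}_t$, the multiset $\{\!\{\mathrm{COMB}_{\text{pair}}(c_t,c_p)\mid p\in\mathcal N(t)\}\!\}$ does not always determine the ordered pair $(c_t,\{\!\{c_p\}\!\})$ (e.g.\ a degree-one node, or a node all of whose incident pairs carry the same two distinct colours), so the per-node refinement claim needs either the residual variant or a graph-level argument to close; the paper's own proof is no more careful on this point, but you should not present the recovery as automatic.
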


\begin{proof}
We first prove by induction that any distinction made by the set-based 2-WL is reflected in  the node representations of the LGAN, so that the final injective readout function further leads to different graph-level representations. We then demonstrate that the LGAN succeeds in distinguishing certain graphs where the 2-WL test fails through a counterexample. To commence, let $c_{\{a,b\}}^{(l)}$ denote the color assigned by the 2-WL algorithm to the unordered node pair $\{a, b\}$ at layer $l$, and let $\mathbf{h}_t^{(l)}$ denote the hidden representation of a target node $t$ produced by the LGAN at the same layer. We assume the inductive hypothesis that for some layer $l \geq 0$, if there exists a node pair $\{a, b\}$ such that $c_{\{a,b\}}^{(l)}(G) \neq c_{\{a,b\}}^{(l)}(H)$, then there exists a node $t \in \{a, b\}$ for which $\mathbf{h}_t^{(l+1)}(G) \neq \mathbf{h}_t^{(l+1)}(H)$.

We first consider the base case $l = 0$. Suppose the initial 2-WL color  $c_{\{a,b\}}^{(0)}$ encodes both the initial features of nodes $a$ and $b$ (i.e., $\mathbf{x}_a$ and $\mathbf{x}_b$), and their connectivity. If $c_{\{a,b\}}^{(0)}(G) \neq c_{\{a,b\}}^{(0)}(H)$, two scenarios are detailed below.

\emph{\textbf{Scenario 1}} represents the structural difference, such as when $a$ and $b$ are connected in $G$ but not in $H$.
Thus, the node-pair representation $\mathbf{h}_{\{a,b\}}^{(0)}$ of $G$ exists and is included in the multiset as given by
\begin{equation}
 \mathbf{h}_{\{a,b\}}^{(0)} \in \left\{\!\!\left\{ \mathbf{h}_{\{a,w\}}^{(0)} \mid w \in \mathcal{N}_G(a) \right\}\!\!\right\} = {M}_{a}{(G)},
\end{equation}
where ${M}_{a}{(G)}$ is the multiset input to  $\mathrm{AGGR}_t$ for the target node $a$ (cf. Eq.~\ref{eq:target-update}).
Since $\mathbf{h}_{\{a,b\}}^{(0)} \notin {M}_{a}{(H)}$ and $\mathrm{AGGR}_t$ is injective, it follows that
\begin{equation}
\mathrm{AGGR}_t\left({M}_{a}{(G)}\right) \neq \mathrm{AGGR}_t\left({M}_{a}{(H)}\right).
\end{equation}
This difference propagates to the node representations updated by the LGAN at the next layer (layer 1), given by
\begin{equation}
\phi( \mathrm{AGGR}_t(M_a{(G)}), \cdot ) \neq \phi( \mathrm{AGGR}_t(M_a{(H)}), \cdot ),
\end{equation}
where \(\phi\) denotes the update function using  concatenation-based MLP, which is injective with respect to its first argument. Hence, \(\mathbf{h}_a^{(1)}(G) \neq \mathbf{h}_a^{(1)}(H)\).


\emph{\textbf{Scenario 2}} represents the feature difference, such as when $(a,b) \in E(G) \cap E(H)$, but $\mathbf{x}_a{(G)} \neq \mathbf{x}_a{(H)}$ or $\mathbf{x}_b{(G)} \neq \mathbf{x}_b{(H)}$. In this scenario, the initial node-pair representations are constructed via the injective function $\mathrm{COMB}_{\text{pair}}$ (cf. Eq.~\ref{eq4}) ensuring that
\begin{equation}
\small
\mathrm{COMB}_{\text{pair}}(\mathbf{x}_a(G), \mathbf{x}_b(G)) \neq \mathrm{COMB}_{\text{pair}}(\mathbf{x}_a(H), \mathbf{x}_b(H)).
\end{equation}
i.e., $\mathbf{h}_{\{a,b\}}^{(0)}(G) \neq \mathbf{h}_{\{a,b\}}^{(0)}(H)$. This difference  is captured by $\mathrm{AGGR}_t$ for target nodes $a$ or $b$, leading to $\mathbf{h}_a^{(1)}(G) \neq \mathbf{h}_a^{(1)}(H)$ or $\mathbf{h}_b^{(1)}(G) \neq \mathbf{h}_b^{(1)}(H)$. Thus the base case holds.


Now suppose the inductive hypothesis holds for layer $l - 1$ and consider the update at layer $l$. Suppose \(c_{\{a,b\}}^{(l)}(G) \neq c_{\{a,b\}}^{(l)}(H)\),
by the 2-WL update rule (cf.~Eq.~\ref{eq2}), this difference must arise from a mismatch in the below multisets
\[
\{\!\!\{ c_{\{w,b\}}^{(l-1)} \mid w \in V \}\!\!\} \quad \text{or} \quad \{\!\!\{ c_{\{a,w\}}^{(l-1)} \mid w \in V \}\!\!\}.
\]
Without loss of generality, assume the former differs between \(G\) and \(H\). The difference in the multiset $\{\!\!\{ c_{\{w,b\}}^{(l-1)} \mid w \in V \}\!\!\}$ implies the existence of at least one node $w^*$ such that $c_{\{w^*,b\}}^{(l-1)}(G) \neq c_{\{w^*,b\}}^{(l-1)}(H)$.
By the inductive hypothesis, this implies \(\mathbf{h}_s^{(l)}(G) \neq \mathbf{h}_s^{(l)}(H)\) for some \(s \in \{w^*, b\}\).
By injectivity of \(\mathrm{COMB}_{\text{pair}}\) and \(\mathrm{AGGR}_t\), this difference further propagates to a target node \(t\) where $t \in \mathcal{N}(s)$,  leading to \(\mathbf{h}_t^{(l+1)}(G) \neq \mathbf{h}_t^{(l+1)}(H)\). Thus the induction holds.

Having established the general result, we now use an example to show that the LGAN can distinguish graphs that the set-based 2-WL cannot. For graphs $G$ and $H$ in Figure~\ref{fig1} with identical initial node labels, any connected node pair (e.g., \(\{1,3\}\)) has the same local statistics under the set-based 2-WL, that is, node 1 connects to node 3 and one other node, and is unconnected to the remaining nodes, while node 3 is symmetric. The multisets \(\{\!\{\{w,3\}\}\!\}\) and \(\{\!\{\{1,w\}\}\!\}\) therefore contain the same number of connected and unconnected pairs in both graphs. As a result, the identical node labels and matching multiset statistics lead to the same node-pair representations. The same process applies to unconnected node pairs. Thus, the set-based 2-WL cannot distinguish $G$ and $H$, but the LGAN can (Figure~\ref{fig1}), proving more expressive.
\end{proof}

\subsection{The Theoretical Connection to $2$-FWL}
The standard $k$-FWL algorithm colors ordered
$k$-tuples and replaces one coordinate of the tuple by all
possible nodes (cf.\ Eq.~\eqref{eq3}), which incurs prohibitive computational costs. To mitigate this, we follow the common practice of converting ordered tuples into unordered sets, obtaining the set-based $k$-FWL update as
\begin{subequations}\label{eq:setkfwl}
\begin{align}
\label{eq:setkfwl-neigh}
&c_{\mathbf v,w}^{(l)}
  \;\leftarrow\;
     \{\!\{\,c_{\mathbf v(1\!\to\!w)}^{(l-1)},\dots,
                         c_{\mathbf v(k\!\to\!w)}^{(l-1)}\}\!\} \\[4pt]
\label{eq:setkfwl-hash}
&c_{\mathbf v}^{(l)}
  \;=\;
    \text{HASH}\Bigl(
      c_{\mathbf v}^{(l-1)},\;
      \bigl\{\!\bigl\{\,c_{\mathbf v,w}^{(l)} \;\big|\; w\in V\bigr\}\!\bigr\}
    \Bigr)
\end{align}
\end{subequations}

Fix the $k=2$ case and choose an unordered node pair
$\mathbf v=\{p,q\}$.
In Eq.~\eqref{eq:setkfwl-neigh}, the 2-FWL traverses every $w\in V$ to form the
multiset
\(
  c^{(l)}_{\mathbf v,w}
  =
  \{\!\{c^{(l-1)}_{\{w,q\}},\,c^{(l-1)}_{\{p,w\}}\}\!\}.
\)
Observe that

\begin{itemize}
\item The collection
      $\{\!\{c^{(l-1)}_{\{w,q\}},\,c^{(l-1)}_{\{p,w\}}\}\!\}$
      exactly mirrors the set of node pairs incident to
      $w$ and $\{p,q\}$ in the LGAN.
\item
      $\displaystyle
      \text{AGGR}_n\bigl(\{\!\{\mathbf h^{(l-1)}_{\{p,q\}}\}\!\}\bigr)$
      plays the same role as the
      $c^{(l-1)}_{\{p,q\}}$
      inside the 2-FWL hash.
\item
      When we regard the auxiliary node $w$ as the target $t$ in the LGAN,
      the multiset
      $\{\!\{\mathbf h^{(l-1)}_{\{w,q\}},\mathbf h^{(l-1)}_{\{p,w\}}\}\!\}$
      becomes precisely the input to $\text{AGGR}_{t}$.
\end{itemize}

Hence, each set-based 2-FWL update on the pair $\{p,q\}$ can be locally simulated by a single LGAN update on the 1-hop induced subgraph around the target node $t=w$. Intuitively, the LGAN can be viewed as a sparse and local variant of the set-based 2-FWL, replacing global tuple updates with line graph aggregation over ego networks.

\subsection{The Time Complexity Analysis}
The LGAN attains the expressive benefits of \(2\text{-}\mathrm{FWL}\) \textbf{without its cubic cost}.
Let \(d_v\) be the degree of node \(v\). A single LGAN layer aggregates \(d_v\) target-neighbor edges and at most \(\binom{d_v}{2} = O(d_v^2)\) neighbor-neighbor edges, giving a per-node cost of  ~\(O(d_v + d_v^2)\).
Summing over all nodes yields a total complexity of
\begin{equation}
O\left( \sum_{v} d_v + \sum_{v} d_v^2 \right) = O\left( m + \sum_{v} d_v^2 \right),
\end{equation}
ranging from \(O(n)\) on  sparse graphs to \(O(n^3)\) in the fully connected worst case-yet with a much smaller constant factor than the global \(2\text{-}\mathrm{FWL}\) due to the locality of the LGAN.

In contrast, higher-order GNNs such as the $k$-GNNs~\cite{morris2019goneural} and tensor-based models~\cite{maron2019universality,maron2019provably} incur $O(n^k)$ complexity.
In practice, the LGAN achieves linear runtime on sparse graphs, compared to the $O(n^3)$ cost of the 2-WL with lower expressivity.

\begin{table*}[ht]
\begin{center}
\begin{tabular}{llcccccc}
\toprule
\multirow{4}{*}{{Datasets}} &  & MUTAG & PTC(MR) & PROTEINS & IMDB-B & IMDB-M & COLLAB \\
 & \# graphs & 188 &344 &1113 &1000 &1500 &5000 \\
 & \# classes & 2 & 2 &2 &2 &3 &3  \\
 & Avg. \# nodes & 18 &26 &39 &20 &13 &74  \\
\midrule
\multirow{4}{*}{{Kernels}}
 & RW & 79.2$\pm$2.1 & 55.9$\pm$0.3 & 59.6$\pm$0.1 & \text{N/A} & \text{N/A} & \text{N/A}  \\
 & SP & 81.7$\pm$ & 58.9$\pm$ & 76.4$\pm$ & 59.2$\pm$ & 40.5$\pm$ & \text{N/A} \\
 & PK & 76.0$\pm$2.7 & 59.5$\pm$2.4 & 73.7$\pm$0.7 & \text{N/A} & \text{N/A} & \text{N/A}  \\
 & 2-WL & 77.0$\pm$ & 61.9$\pm$ & 75.2$\pm$ & 72.6$\pm$ & 50.6$\pm$ & \text{N/A} \\
\midrule
\multirow{4}{*}{{GNNs}}
 & DCNN & \text{N/A} & \text{N/A} & 61.3$\pm$1.6 & 49.1$\pm$1.4 & 33.5$\pm$1.4 & 52.1$\pm$0.7  \\
 & \textsc{Patchy}\textsc{San} & \underline{92.6$\pm$4.2} & 60.0$\pm$4.8 & 75.9$\pm$2.8 & 71.0$\pm$2.2 & 45.2$\pm$2.8 & 72.6$\pm$2.2  \\
 & DGCNN & 85.8$\pm$1.7 & 58.6$\pm$2.5 & 75.5$\pm$0.9 & 70.0$\pm$0.9 & 47.8$\pm$0.9 & 73.8$\pm$0.5 \\
 & GIN & 89.4$\pm$5.6 & 64.6$\pm$7.0 & 76.2$\pm$2.8 & 75.1$\pm$5.1 & 52.3$\pm$2.8 & 80.2$\pm$1.9  \\
\midrule
\multirow{4}{*}{{$k$-WL-based}}
 & 1-2-3 GNN & 86.1$\pm$ & 60.9$\pm$ & 75.5$\pm$ & 74.2$\pm$ & 49.5$\pm$ & \text{N/A} \\
 & PPGN & 90.6$\pm$8.7 & 66.2$\pm$6.5 & \underline{77.2$\pm$4.7} & 73.0$\pm$5.8 & 50.5$\pm$3.6 & 81.4$\pm$1.4  \\
 & $\delta$-2-LWL & \text{N/A} & \text{N/A} & 75.1$\pm$0.3 & 73.3$\pm$0.5 & 50.2$\pm$0.6 & \text{N/A}  \\
 & CW Networks & \textbf{92.7$\pm$6.1} & \textbf{68.2$\pm$5.6} & 77.0$\pm$4.3 & 75.6$\pm$3.7 & 52.7$\pm$3.1 & \text{N/A}  \\
\midrule
\multirow{2}{*}{{Ours}}
 & LGAN &  92.5$\pm$6.3 &  \underline{67.4$\pm$6.2} &  \textbf{77.3$\pm$3.7} &  \textbf{76.7$\pm$4.0} &  \underline{53.3$\pm$3.2} &  \textbf{82.8$\pm$1.5} \\
 & LGAN-\emph{res} &  92.0$\pm$5.8  &  66.6$\pm$7.0 &  76.8$\pm$3.6 &  \underline{76.6$\pm$3.1}  &  \textbf{53.5$\pm$2.7} & \underline{82.7$\pm$1.7} \\
\bottomrule
\end{tabular}
\caption{Test set classification accuracies (\%). The mean accuracy and standard deviation are reported. Best performances are highlighted in bold, and second-best performances are underlined. \text{N/A} means Not Available.}
\label{table1}
\end{center}
\end{table*}

\section{Experiments}
In this section, we evaluate the LGAN on six graph classification benchmarks from social networks, bioinformatics, and chemistry~\cite{yanardag2015deep}.  We also demonstrate its interpretability via edge attribution using Integrated Gradients~\cite{IG}.
\subsection{Experimental Setups}

\noindent\textbf{Setup.} We adopt a 10-fold cross-validation strategy with stratified splits that preserve the label distribution across folds. We report our results following the evaluation protocol described in~\cite{xu2018powerful}. Node degree one-hot encodings are used for social networks, and provided node labels or attributes for bioinformatics and chemical datasets.

\noindent\textbf{Hyper-parameters.} We tune the number of layers, hidden dimension, dropout, and learning rate for each dataset.

\noindent\textbf{Baselines.}  We compare the  proposed LGAN against three representative  categories of models: graph kernels, standard GNNs, and $k$-WL-based methods. Specifically, the graph kernel baselines include the random walk kernel (RW)~\cite{vishwanathan2010graph}, shortest path kernel (SP)~\cite{borgwardt2005shortest}, propagation kernel (PK)~\cite{neumann2016propagation},  and 2-WL kernel~\cite{morris2019goneural}. The GNN baselines include the DCNN~\cite{atwood2016diffusion}, PATCHY-SAN~\cite{niepert2016learningconvolutionalneuralnetworks}, DGCNN~\cite{zhang2018end} and GIN~\cite{xu2018powerful}. The $k$-WL-based methods include the 1-2-3 GNN~\cite{morris2019goneural}, PPGN~\cite{maron2019provably}, $\delta$-$k$-LWL~\cite{morris2020weisfeiler}, and CW Networks~\cite{bodnar2021weisfeiler}. 
\subsection{Experimental Results and Analysis}
\textbf{Table~\ref{table1}} presents classification accuracies on six benchmark datasets. The proposed LGAN and LGAN-\emph{res} achieve superior or comparable performance across most datasets. Compared to kernel-based methods and standard GNNs, our models offer consistently better accuracy. While several $k$-WL-based models attain strong results, they  face scalability issues on large graphs (e.g., COLLAB). Overall, the LGAN strikes a favorable balance between accuracy and efficiency.
\subsection{Interpretability via Edge Attribution}
We assess the interpretability of the LGAN using edge attribution (Integrated Gradients) to determine whether it can identify critical substructures contributing to classification.

Figures~\ref{fig3}(a) and (b) present two synthetic graphs $G^{\prime}$ and $H^{\prime}$, both of which are indistinguishable by the 2-WL due to having identical initial node features and neighborhood distributions. Similar to the pair in Figure~\ref{fig1}, they differ only in the presence or absence of a triangle, which defines their class label. The LGAN successfully assigns high importance to the triangle edges, highlighting its ability to capture essential neighborhood interactions.

Figures~\ref{fig3}(c) and (d) show two structurally similar molecules from the Mutagenicity dataset: 2-nitroanisole (mutagen) and 2-nitrobenzyl alcohol (non-mutagen). Both contain a nitro group (NO$_2$), which is often associated with mutagenic activity. However, substituent difference (\ce{-OCH3} vs. \ce{-CH2OH}) determines mutagenicity. The LGAN correctly emphasizes edges near critical functional groups, revealing chemically meaningful patterns. 
In summary, the LGAN offers fine-grained interpretability beyond the $k$-WL-based models that lack localized attribution.


\begin{figure}[h]
  \centering
  \begin{subfigure}{0.39\columnwidth}
    \includegraphics[width=\linewidth]{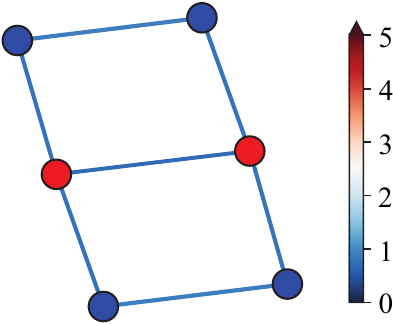}
    \caption{G': No triangle}
    \label{fig3_a}
  \end{subfigure} \hfill
\begin{subfigure}{0.39\columnwidth}
    \includegraphics[width=\linewidth]{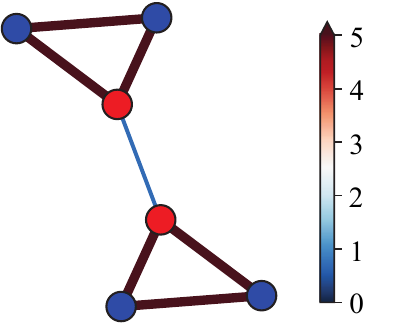}
    \caption{H': With triangle}
    \label{fig3_b}
  \end{subfigure}

\begin{subfigure}{0.42\columnwidth}
    \includegraphics[width=\linewidth]{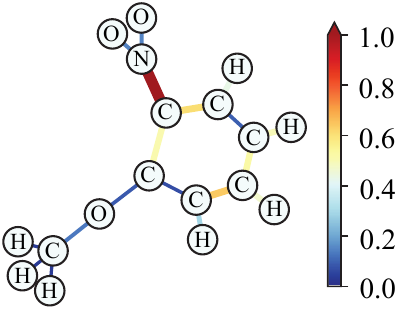}
    \caption{Mutagen molecule}
    \label{fig3_c}
  \end{subfigure} \hfill
\begin{subfigure}{0.42\columnwidth}
    \includegraphics[width=\linewidth]{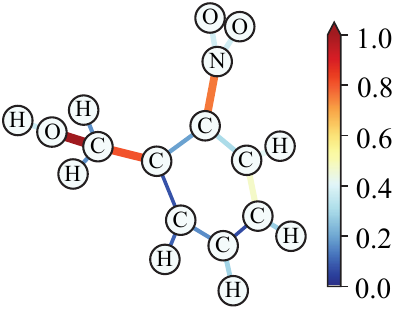}
    \caption{Non-mutagen molecule}
    \label{fig3_d}
  \end{subfigure}
\caption{Edge visualizations with Integrated Gradients (redder and thicker edges represent higher importance).}
\label{fig3}
\end{figure}
\section{Conclusion}
In this paper, we introduced the LGAN, a novel GNN that surpasses 2-WL expressivity through localized line graph aggregation. The LGAN is both efficient and interpretable, and performs competitively on graph classification tasks against state-of-the-art methods, including the $k$-WL-based models. Moreover, the LGAN can be naturally applied to node- and edge-level tasks, as its target-aware aggregation mechanism is general. In future works, we plan to extend the LGAN to more diverse applications, such as complex structural alignment and relational modeling~\cite{bai2025aegk}.

\section*{Acknowledgments}
This work is supported by the National Natural Science Foundation of China (No. 62576371, T2122020, 62576198, and 62471288).


\bibliography{aaai2026}
\end{document}